\def\eqref#1{equation~\ref{#1}}
\def\1{\bm{1}}
\DeclareMathAlphabet{\mathsfit}{\encodingdefault}{\sfdefault}{m}{sl}
\SetMathAlphabet{\mathsfit}{bold}{\encodingdefault}{\sfdefault}{bx}{n}
\DeclareMathOperator*{\argmin}{arg\,min}
\newcommand{\fig}{{Figure }}
\newcommand{\eq}{{Eq. }}
\newcommand\blfootnote[1]{%
\begingroup
\renewcommand\thefootnote{}\footnote{#1}%
\addtocounter{footnote}{-1}%
\endgroup
}
\title{Efficient Reinforcement Learning \\with Large Language Model Priors}
\author{{\bf Xue Yan$^{~1~2}$ , Yan Song$^{~3}$, Xidong Feng$^{~3}$, 
Mengyue Yang$^{~4}$}\\{\bf Haifeng Zhang$^{\dag~1~2}$, Haitham Bou Ammar$^{~3~5}$, 
Jun Wang$^{\dag~3}$}
\\
\vspace{0.05 cm}\\
{$^1$Institute of Automation, Chinese Academy of Science, Beijing, China}\\
{$^2$School of Artificial Intelligence, University of Chinese Academy of Sciences, China}\\
{$^3$ University College London, UK}
{$^4$ University of Bristol}\\
{$^5$ Huawei Technologies, London, UK}}
\begin{document}

\maketitle
\blfootnote{$^\dag$Correspondence to : \textlangle haifeng.zhang@ia.ac.cn \textrangle, \textlangle jun.wang@cs.ucl.ac.uk\textrangle\\}







\begin{abstract}
In sequential decision-making (SDM) tasks, methods like reinforcement learning (RL) and heuristic search have made notable advances in specific cases. However, they often require extensive exploration and face challenges in generalizing across diverse environments due to their limited grasp of the underlying decision dynamics. In contrast, large language models (LLMs) have recently emerged as powerful general-purpose tools, due to their capacity to maintain vast amounts of domain-specific knowledge. To harness this rich prior knowledge for efficiently solving complex SDM tasks, we propose treating LLMs as prior action distributions and integrating them into RL frameworks through Bayesian inference methods, making use of variational inference and direct posterior sampling. The proposed approaches facilitate the seamless incorporation of fixed LLM priors into both policy-based and value-based RL frameworks. Our experiments show that incorporating LLM-based action priors significantly reduces exploration and optimization complexity, substantially improving sample efficiency compared to traditional RL techniques, e.g., using LLM priors decreases the number of required samples by over 90\% in offline learning scenarios.

\end{abstract}


\section{Introduction}

Many real-world tasks, including robotics \citep{polydoros2017survey, brunke2022safe, rana2023residual}, autonomous driving \citep{naranjo2005power, song2022quantum}, human-AI dialogue \citep{mctear2022conversational, li2019dialogue}, and human-AI gaming \citep{yan2024efficient, granter2017alphago}, involve complex sequential decision-making (SDM) challenges. Traditional approaches to SDM, such as optimal control \citep{garcia1989model}, heuristic search \citep{swiechowski2023monte} and reinforcement learning (RL) \citep{mnih2013playing}, have seen substantial success. Notably, AlphaGo \citep{44806} and AlphaStar \citep{vinyals2019grandmaster}, both based on deep reinforcement learning (DRL), have achieved human-level proficiency in the games of Go and StarCraft II, respectively. However, these methods still suffer from high computational complexity, along with poor generalizability and limited applicability across diverse domains \citep{dulac2015deep,cobbe2019quantifying}. 

{
    Recently, Large Language Models (LLMs) have emerged as effective tools for tackling diverse general-purpose tasks, such as in dialogue systems \citep{brooks2023large}, decision-making \citep{zhao2024expel}, and mathematical reasoning \citep{imani2023mathprompter}. Their impressive performance across various domains is largely attributed to the vast amounts of human knowledge compressed during the pre-training phase on extensive corpora \citep{tucker2023increasing}. 
    Inspired by how humans make decisions using existing knowledge while learning from new tasks, LLM-based agents are emerging as a promising approach for solving SDM tasks. For instance, researchers leverage human-crafted prompts to guide LLMs in making decisions directly \citep{zhang2023proagent, wang2023voyager, ma2023large}. While this approach relies heavily on the quality of the prompts and the LLMs' inherent capabilities, another line of research aims to fine-tune LLMs with RL algorithms (RLFT) \citep{carta2023grounding, christianos2023panguagent,tantrue,zhou2024reflect} to enhance their decision-making precision. A more detailed exploration of related work can be found in Section \ref{Sec:RelatedWork}. However, these approaches are resource-intensive, as they require meticulously crafted human prompts and an expensive fine-tuning process for LLMs for each specific task.



}


Inspired by these, this work stands in the position of combining the advantages of leveraging rich domain knowledge with the strengths of reinforcement learning (RL), but in a different perspective to use fixed LLMs to enhance traditional RL.
The key insight is that while it may be challenging for an LLM to generate optimal plans at every state with limited experience, it can still offer suboptimal action proposals or significantly reduce the size of the action space to a manageable scope, thereby alleviating computational burdens. Building on this idea, we treat the LLM as an action prior distribution and incorporate it into Markov Decision Processes (MDPs) solving from a Bayesian inference perspective \citep{hu2023amortizing}. The primary objective is to approximate the posterior action distribution that aligns with task goals through probabilistic inference approaches such as variational inference or direct posterior sampling. Building on these approaches, we derive the incorporation of LLMs into both policy-based and value-based RL frameworks in a simple yet logically sound manner. We conduct experiments on major benchmarks like ALFWorld and Overcooked, demonstrating that the sample efficiency of traditional RL algorithms, such as value-based DQN and policy-based PPO, can largely benefit from integrating LLM action priors.



In summary, our main contributions are three-fold:
\begin{enumerate}
    \item {We present a unified framework for integrating Large Language Models (LLMs) as probabilistic priors into Markov decision-making frameworks. \textbf{(Section \ref{sec: method})}
} 
    \item {We practically implement the framework by leveraging LLMs as the refined action sampler for value-based online RL \textbf{(Section \ref{sec: online_value_rl})}, offline RL \textbf{(Section \ref{sec: offline_value_rl})} or behavior regularizer for policy-based RL \textbf{(Section \ref{sec: online_policy_rl})}.}
    \item {Extensive experiments on ALFWorld and Overcooked demonstrate that our new framework can significantly boost sample efficiency compared with both pure RL and pure LLM baselines, and also bring in more robust and generalizable value function. \textbf{(Section \ref{sec: exp})}}

\end{enumerate}

\section{Formulation}
\label{sec: method}
{
We seek to utilize LLMs to solve complex SDM tasks from a Bayesian inference perspective. Although LLM struggles to directly generate optimal plans in sequential decision-making tasks, it can still provide valuable suggestions for possible sub-optimal actions with rich prior knowledge\cite{hao2023reasoning,zhang2024can}. Building on this insight, we treat the LLM as a reliable prior distribution over possible actions and analyze its role in solving MDPs from a Bayesian inference perspective, using traditional probabilistic tools such as variational inference and probabilistic sampling.

}
\paragraph{Markov Decision Process}
A Markov Decision Process (MDP) can be described as a tuple $\langle\mathcal{S},\mathcal{A},P,r,\gamma\rangle$. $\mathcal{S}$ is the state space, and $\mathcal{A}$ is the finite action space. We consider the textual action space and state space here. Each state $s$ and each action $a$ is a sentence $s,a\in \mathcal{V}^\infty$, where $\mathcal{V}$ is the vocabulary set. $P:\mathcal{S}\times\mathcal{A}\rightarrow \Delta(\mathcal{S})$ is the transition kernel, mapping the current state $s_t$ to the next state $s_{t+1}$ following the action $a_t$. $r:\mathcal{S}\times\mathcal{A}\rightarrow \mathbb{R}$ is the reward function. $\gamma$ is the discount factor.
\paragraph{LLM as an action prior}
It is challenging to directly solve textual decision-making tasks due to the lack of task-specific experience, yet powerful LLMs demonstrate the ability to generate reasonable action proposals \cite{yao2024tree, hao2023reasoning}. To maximize the potential of LLMs in decision-making, we treat the LLM as an action sampler, denoted as $p_{\text{LLM}}(a|s_t)$. Regarding the implementation details of sampling from LLM priors, we first generate a free-form output from the LLM, for example, a 7B LLM, which is then mapped to an executable action through a simple rule-based projection. This process is formally described as: $p_{\text{LLM}}(a|s_t) = \sum_o p(a|o)\text{LLM}(o|s_t)$, where $p(a|o)$ is a projection from the LLM output $o$ to action $a$. More detailed explanations of the LLM prior setting can be found in Appendix.
\subsection{Variational Inference For Markov Decision Process}

{ 

{
Similar to the \textit{Control as Inference} framework proposed by \citep{levine2018reinforcement}, we introduce a binary optimality variable $\mathcal{O}$ to indicate the quality of a trajectory $\tau = \{s_0, a_0, r_0, s_1, ..., s_n\}$, where $\mathcal{O}=1$ represents an optimal/successful trajectory, and $\mathcal{O}=0$ otherwise. Hence, the likelihood, positively related to the reward, can be written as $p(\mathcal{O}=1|\tau) \propto \exp \left(\sum_t \gamma^t r_t / \alpha \right)$, with $\alpha$ as the temperature parameter.
Our goal is to maximize the optimal marginal distribution. To do that, we first apply variational inference:
\begin{equation}\label{eq: VI}
    p(\mathcal{O}=1) \geq \mathbb{E}_{q(\tau)}\left[\log p(\mathcal{O}=1|\tau) \right] - \text{KL}\left[q(\tau) \| p(\tau)\right]
\end{equation}
We define the prior $p(\tau)$ as the trajectory distribution, which follows the factorization: $p(\tau) = p(s_0) \prod_t p(a_t|s_t) P(s_{t+1}|s_t, a_t)$. Here, we use $p_\text{LLM}$ as the action prior $p$. The variational distribution $q(\tau)$ follows a similar factorization: $q(\tau) = p(s_0) \prod_t q(a_t|s_t) P(s_{t+1}|s_t, a_t)$. Substituting these factorizations back into \eq \ref{eq: VI}, we obtain the step-wise objective as:
\begin{equation}
\label{variation}
    \arg\max_\theta \sum_{t} \mathbb{E}_{\pi_{\theta}}\left[\gamma^t r_t \right] - \alpha {KL}\left[ \pi_{\theta}(a_t|s_t) \| p_{LLM}(a_t|s_t) \right],
\end{equation}
and we learn a language model policy $\pi_\theta$ as the variational distribution $q(a_t|s_t)$, with parameter $\theta$.
More detailed derivation can be found in Appendix \ref{appendix: probabilistic}. Notice that Equation \ref{variation} is akin to formulation from \citep{wen2024entropy, rafailov2023direct,zhang2024can}, where they fine-tune a language model $\pi_{\theta}$, applying KL regularization terms to preserve the model's original language capabilities in RLHF settings or as an auxiliary reward for inference. In essence, from a probabilistic inference perspective, we can recover the same objective of these "policy-based" methods, which in turn provides us with practical approaches for implementation (Section \ref{sec: implementation}).


However, fine-tuning language models in policy-based methods can be challenging in practice. To address this, we propose an alternative approach: \textbf{direct posterior inference}, which allows us to sample directly from the optimal posterior distribution $p(\tau|\mathcal{O}=1)$ for sequential decision-making.

}

\subsection{Direct Posterior Inference for Markov Decision Process}
\label{posterior_infer}
\begin{figure}[htbp]
\vspace{-1em}
    \centering    \includegraphics[width=\linewidth]{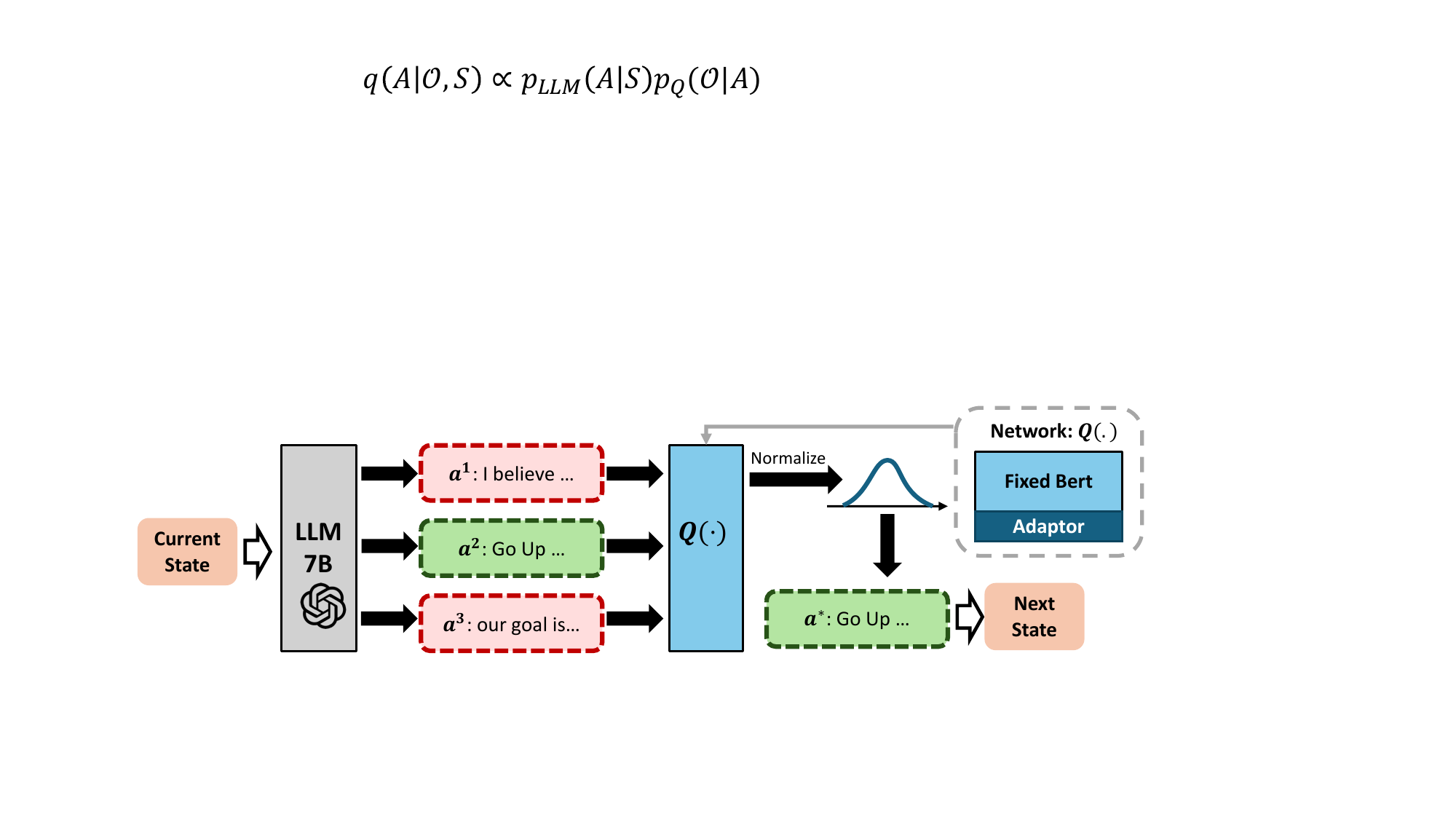}
    \caption{An illustration of the process of approximate sampling from the intractable posterior: $ q(a | s, \mathcal{O} = 1) \propto p(\mathcal{O}=1|a,s_t)p_\text{LLM}(a|s_t)$, implemented by reweighting the action prior proposals according to $Q$-values, which act as likelihood estimates. In our experiments, the $Q$ function adopts BERT to encode textual state-action pairs and output a scalar value through an adaptor network.
 }
    \label{fig:sampling}
\end{figure}

Since the optimal posterior distribution over trajectory $\tau$ can be factorized into step-wise posterior described as: $p(\tau|\mathcal{O}=1)\propto p(s_0)\prod_t p(a_t|s_t, \mathcal{O}=1)P(s_t|s_{t-1},a_{t-1})$, the learning procedure can be divided into two stages, modeling and inference of step-wise posterior $p(a|s_t, \mathcal{O}=1)$. The modeling stage will point out and decompose the desirable distribution into more solvable probabilistic terms. The inference stage will give the desirable outputs using sampling-based techniques \citep{korbak2022rl}.

\paragraph{Posterior Modeling}
We first use Bayes' rule to translate the intricate posterior into accessible terms:  
\begin{equation}
p(a|s_t, \mathcal{O}=1) =\frac{p(\mathcal{O}=1|a,s_t)p(a|s_t)}{p(\mathcal{O}=1|s_t)}\propto p(\mathcal{O}=1|a,s_t)p(a|s_t),
\end{equation}
which is a combination of the likelihood $p(\mathcal{O}=1|a,s_t)$ and the prior $p(a|s_t)$. Here, we use the LLM as the prior action distribution $p(a|s_t)\leftarrow p_\text{LLM}(a|s_t)$. 
Recall that the optimality likelihood in MDPs is given by: $ p(\mathcal{O}=1|a, s_t) \propto \exp\left(\sum_{i=t}  \gamma^{i-t}r_i\right) $, which corresponds to whether the goal state is achieved, similar to the concept of the Q function $Q(s_t,a)$ in RL. In practice, methods such as MCTS or TD-learning can estimate the optimality likelihood $p(\mathcal{O}=1|s_t, a)$ 
\paragraph{Action Inference} 
Having described the posterior as a product between LLM prior $p_{LLM}(a|s_t)$ and estimate $Q$ function $ Q^\theta(s_t, a) $, we can implement the inference from the posterior distribution $p(a|s_t,\mathcal{O}=1)$ by following these steps:
\begin{itemize}
    \item Sample actions based on the prior policy $p_\text{LLM}$: Obtain $k$ action candidates for the current state $s_t$, denoted as: $\mathcal{C}^k(s_t)=\{a_1,a_2,\cdots,a_k\}$, where $a_i \sim p_\text{LLM}(a|s_t)$.
    \item Choose the next action from the candidates based on the estimated Q-values: Select an action $a^*$ according to the softmax probability distribution over the estimated Q-values. Specifically, $a^* \sim \operatorname{softmax}( Q^\theta(s_t, a_1)/\alpha,  Q^\theta(s_t, a_2)/\alpha, \dots, Q^\theta(s_t, a_k)/\alpha)$, where $\alpha$ is a hyper-parameter.
\end{itemize}
This sampling strategy is illustrated in \fig \ref{fig:sampling}, where the action candidates are reweighted based on the Q-values. 
\begin{restatable}{props}{propsa}
\label{proposition}
Denote that the above sampling strategy indeed follows a distribution of $q$. When $k\rightarrow \infty$, we have:
\begin{equation}
\begin{aligned}
    \lim_{k\rightarrow\infty} q(a|s_t)&=  p_\text{LLM}(a|s_t){\exp( Q^\theta(s,a)/\alpha)}/{\mathbb{E}_{a_j \sim p_\text{LLM}(\cdot|s)} \exp( Q^\theta(s,a_j)/\alpha)}
\end{aligned}
\end{equation}
The limiting policy corresponds to the policy that optimizes the Q-values with a KL regularizer:
\begin{equation}
\begin{aligned}
        \lim_{k\rightarrow\infty} q(\cdot|s_t)&= \arg\max _\pi \mathbb{E}_{\pi(a|s_t)} [Q^\theta(s_t,a)]-\alpha \text{KL}\left(\pi(\cdot|s_t)\|p_\text{LLM}(\cdot|s_t)\right)
\end{aligned}
\end{equation}
Then, the posterior sampling strategy is highly related to the solution of variational inference as shown in \eq \ref{variation}. 
Proof. Please see the appendix. 
\end{restatable}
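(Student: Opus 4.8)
The plan is to split the argument into two independent parts matching the two displayed equations, and I would handle the limiting identity first. Writing $w(a)=\exp(Q^\theta(s,a)/\alpha)$ for the unnormalized softmax weight, the two-stage procedure (draw $a_1,\dots,a_k$ i.i.d.\ from $p_\text{LLM}$, then select $a_i$ with probability $w(a_i)/\sum_j w(a_j)$) means that for any bounded test function $f$,
\begin{equation}
\mathbb{E}_{a\sim q}[f(a)] = \mathbb{E}_{a_1,\dots,a_k\sim p_\text{LLM}}\left[\frac{\tfrac1k\sum_{i=1}^k w(a_i)f(a_i)}{\tfrac1k\sum_{j=1}^k w(a_j)}\right].
\end{equation}
This rewriting of the selection probability as a ratio of empirical averages is the crucial move, since it converts the $k\to\infty$ question into a law-of-large-numbers statement rather than a delicate manipulation of the tagged term $w(a_i)$ appearing in both numerator and denominator.

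Next I would pass to the limit. By the strong law of large numbers the numerator converges almost surely to $\mathbb{E}_{p_\text{LLM}}[w(a)f(a)]$ and the denominator to $Z:=\mathbb{E}_{p_\text{LLM}}[w(a)]$. Assuming $Q^\theta$ is bounded, so that $0<m\le w\le M<\infty$, the integrand is uniformly bounded by $M\|f\|_\infty/m$ and the denominator is bounded away from $0$; dominated convergence then justifies interchanging limit and expectation, giving $\mathbb{E}_{a\sim q}[f(a)]\to \mathbb{E}_{p_\text{LLM}}[w(a)f(a)]/Z$. Since this holds for every bounded $f$, the limiting law has density $q_\infty(a)=p_\text{LLM}(a)w(a)/Z$, which is exactly the claimed expression with $Z=\mathbb{E}_{a_j\sim p_\text{LLM}}\exp(Q^\theta(s,a_j)/\alpha)$.

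For the second displayed equation I would invoke the Gibbs variational principle. Expanding the objective and completing the KL divergence, for any policy $\pi$ one has
\begin{equation}
\tfrac1\alpha\Big(\mathbb{E}_{\pi}[Q^\theta(s_t,a)]-\alpha\,\text{KL}(\pi\|p_\text{LLM})\Big)=\log Z-\text{KL}\!\left(\pi\,\middle\|\,q_\infty\right),
\end{equation}
where $q_\infty(a)\propto p_\text{LLM}(a)\exp(Q^\theta(s,a)/\alpha)$ is the distribution found above and $Z$ its normalizer. Because $\log Z$ does not depend on $\pi$ and the KL term is non-negative and vanishes only at equality, the maximizer is $\pi=q_\infty$, so the variational optimum coincides with the limiting sampling policy, establishing the stated connection with Equation~\ref{variation}.

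The main obstacle is making the first limit rigorous: the concentration of the denominator and the interchange of limit and expectation both rely on $w$ being bounded above and away from zero, i.e.\ on $Q^\theta$ being finite. I would state this as a mild regularity assumption, automatically satisfied for a bounded-reward MDP with a bounded value estimator. A secondary subtlety is that for a continuous or textual action space the statement $\lim_k q=q_\infty$ should be read in the weak sense established by testing against bounded $f$, and repeated candidates among the $a_i$ must be permitted; neither point affects the final identity, since the ratio-of-averages argument never requires distinct candidates.
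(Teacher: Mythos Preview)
Your proposal is correct and follows the same underlying idea as the paper: rewrite the selection probability as a ratio of empirical averages over the $k$ candidates and pass to the limit via the law of large numbers, then identify the limiting tilted distribution as the maximizer of the KL-regularized objective. The differences are presentational rather than conceptual. The paper works pointwise at a fixed action, computing $q(a|s_t)=\mathbb{E}\big[\tfrac{1}{k}\sum_i\mathbb{I}(a_i=a)\big/\tfrac{1}{k}\sum_j w(a_j)\big]\cdot w(a)$ and then applying the law of large numbers to each empirical mean separately; this is tailored to the discrete action space assumed in the paper. Your test-function route establishes weak convergence and so extends cleanly to continuous or textual action spaces, and you are more explicit about the dominated-convergence step needed to swap limit and expectation (which the paper omits). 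For the second identity, the paper simply cites the closed-form solution from \cite{rafailov2023direct}, while you rederive the Gibbs variational principle by completing the KL; the content is identical. Overall your argument is a slightly more careful and more general version of exactly what the paper does.
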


\section{Practical Implementation}\label{sec: implementation}
In this section, we introduce the practical implementation of posterior approximation within robust RL frameworks. We describe a policy-based RL method with an LLM prior to solve the variational inference problem. In the meantime, as discussed earlier, optimality likelihood estimation is critical for direct posterior inference and is positively associated with Q-values in the MDP setting. We consider Q-value estimation with an LLM prior for value-based RL in both online and offline settings. Finally, we propose three simple yet efficient RL algorithms: one policy-based RL algorithm and two Q-learning variants.

We will first introduce the two Q-learning variants, which perform exploration and optimization within a narrowed yet reliable LLM prior action space, eliminating the need for fine-tuning the language model-based policy. Then, we will present a policy-based PPO variant that incorporates a KL loss with respect to the LLM prior. Detailed descriptions of classic Q-learning algorithms, DQN \cite{mnih2013playing} and CQL \cite{kumar2020conservative}, can be found in the Appendix.
\subsection{Online Value-Based RL with LLM prior} 
\label{sec: online_value_rl}
The Q-value estimation is based on DQN \citep{mnih2013playing}, a widely used value-based RL method, under the online setting. Unlike traditional DQN, which performs exploration and Bellman updates over the full action space $\mathcal{A}$, we propose a variant with an LLM prior, referred to as DQN-Prior. This variant limits these processes to the prior action space $\mathcal{C}^k \subseteq \mathcal{A}$, which is sampled from $p_\text{LLM}$. This prior action space represents a reduced and more rational subspace where clearly irrelevant actions may be filtered out in advance. Specifically, DQN-Prior alternates between the following steps:
\paragraph{Exploration} 
Roll out new episodes using the posterior inference strategy as described in Sec. \ref{posterior_infer}, specifically:
\begin{itemize}
    \item For a state $s_t$, sample the action prior space $\mathcal{C}^k(s_t)=\{a_1,a_2,\cdots,a_k\}, a_i \sim p_\text{LLM}(a|s_t)$. 
    \item Then sample the action $a_t\sim \operatorname{softmax}\left({Q^\theta(s_t,a_1)/\alpha},\cdots,{Q^\theta(s_t,a_k)/\alpha}\right)$. 
    \item Apply the action $a_t$ to the environment, and obtain the next state $s_{t+1}\sim P(\cdot|s_t,a_t)$. 
    \item Add the tuple to the reply buffer: $\mathcal{D}=\mathcal{D}\bigcup (s_t,a_t,s_{t+1})$. 
\end{itemize}
\paragraph{Q-function Update} After expanding the replay buffer, we use TD-learning to update the Q-network. The loss function for the $i$-th iteration is given as:
\begin{equation}
\label{dqn-prior}
\mathcal{L}_i(\theta)=\mathbb{E}_{(s,a,s^\prime)\sim \mathcal{D}}\left[(Q^{\theta_i}(s,a)-y_i)^2\right],
\end{equation}
where $y_i=\mathcal{B}^*Q^{\theta_{i-1}}(s,a)=r(s,a)+\gamma \max_{\textcolor{red}{a^\prime\in \mathcal{C}^k(s^\prime)}} Q^{\theta_{i-1}}(s^\prime,a^\prime)$.
Compared to traditional DQN, the DQN-Prior performs exploration and applies the Bellman optimal operator within the LLM prior action space, as highlighted in \textcolor{red}{red} in \eq \ref{dqn-prior}.

To represent the Q-network, we introduce an adapter on top of the LLM embeddings and fit the adapter's weights by minimizing the temporal difference loss. In our work, the LLM (i.e., BERT \citep{devlin2018bert}) is used to encode state and action pairs and is frozen during optimization, as illustrated in \fig  \ref{fig:sampling}. Therefore, we do not propagate any gradients through the LLM to avoid expensive computational overhead.

\subsection{Offline Value-Based RL with LLM Prior}
\label{sec: offline_value_rl}

This study also explores the feasibility of using gradient-free LLM priors for successful SDM based solely on offline datasets.
\citep{snelloffline} have explored the use of offline RL for language generation tasks based on the SFT LLM from offline datasets. This approach assumes the entire token vocabulary as the action space, where each action corresponds to a single token, resulting in a vast action space with tens of thousands of possible actions. In our work, we consider a discrete action space where each action is a short phrase consisting of several tokens. For the SDM tasks we address, the action space is significantly smaller, with fewer than a hundred possible actions. Additionally, by leveraging an action prior, we can first obtain a tidy sub-action space $\mathcal{C}^k$, thereby narrowing the optimization space. 

We apply this action prior within the CQL framework, a widely used offline RL approach, and refer to this variant as CQL-Prior. The loss function for the proposed CQL-Prior is given as:
\begin{equation}
\label{cql_loss}
\mathcal{L}_i(\theta)=\beta\mathbb{E}_{(s,a)\sim \mathcal{D}}\left[\log \sum_{\textcolor{red}{a^\prime\in \mathcal{C}^k(s)}} \exp(Q^{\theta_i}(s,a^\prime))-Q^{\theta_i}(s,a)\right]+\frac{1}{2}\mathbb{E}_{(s,a,s^\prime)\sim \mathcal{D}} \left[(Q^{\theta_i}(s,a)-y_i)^2\right],
\end{equation}
where $y_i=r(s,a)+\gamma \max_{\textcolor{red}{a^\prime\sim \mathcal{C}^k(s^\prime)}} Q^{\theta_{i-1}}(s^\prime,a^\prime)$ and $\beta$ is a hyper-parameter. The main difference from traditional CQL is that it restricts the overestimation of Q-values to the tidy action prior space and applies the Bellman optimal operator within this space, as highlighted in red.
\subsection{Policy-based RL with LLM Prior}
\label{sec: online_policy_rl}

We implement variational inference for MDPs with LLMs as action priors using a policy-based RL framework. Specifically, we build upon GFlan \citep{carta2023grounding}, let the Flan-T5 small model \citep{rae2021scaling} (with less than $1B$ parameters) as the action policy for leveraging pre-stored knowledge to solve complex SDMs and fine-tune it via Proximal Policy Optimization (PPO) \citep{schulman2017proximal}. Our variant, called GFlan-Prior, incorporates a KL regularizer into the PPO loss to optimize \eq \ref{variation}. During implementation, we sample $k=5$ examples from LLM prior distribution $p_\text{LLM}(a|s)$ to compute the approximated KL divergence.
Formally, for the $i$-th iteration, the policy $\pi_{\theta_i}$ is learned by optimizing:
\begin{equation}\label{ppo}\arg\max_\theta\mathbb{E}_{(s_t,a_t)\sim \pi_{\theta_{i-1}}}\text{CLIP}\left(\frac{\pi_\theta(a_t|s_t)}{\pi_{\theta_{i-1}}(a_t|s_t)}\right)\hat{A}_t+\gamma \mathcal{H}(\pi_\theta(\cdot|s_t))-\alpha\text{KL}[\pi_\theta(\cdot|s_t)\|\hat{p}_\text{LLM}(\cdot|s_t)],
\end{equation}
where $A_t$ is the estimated advantage, $\gamma$ is the hyperparameter that encourages exploration, and $\hat{p}_\text{LLM}(\cdot|s_t)$ is the discrete distribution constructed by $k$ action proposals from the LLM. ${\theta_{i-1}}$ represents the parameters of the action policy learned in the last iteration.

\begin{figure*}[t!]
\centering
	  \includegraphics[width=0.5 \linewidth]{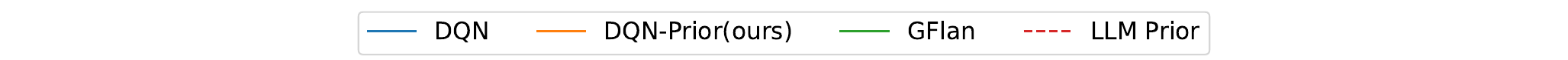}\\
{\includegraphics[width=0.30\linewidth ]{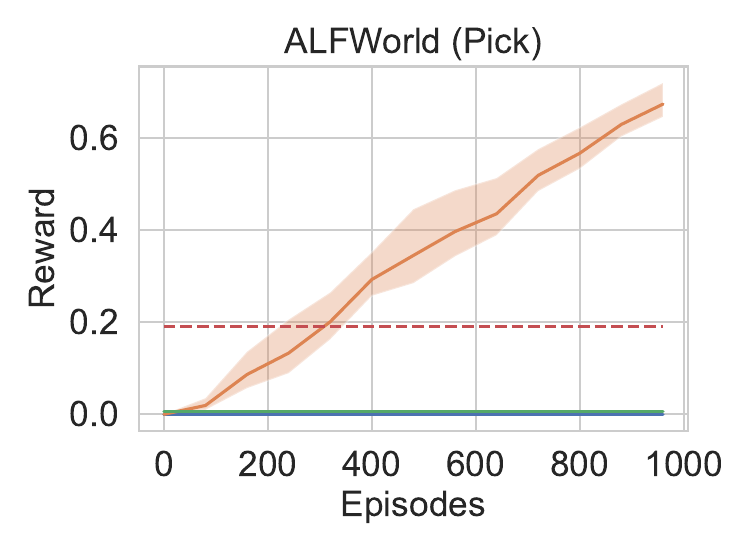}}
{\includegraphics[width=0.30 \linewidth ]{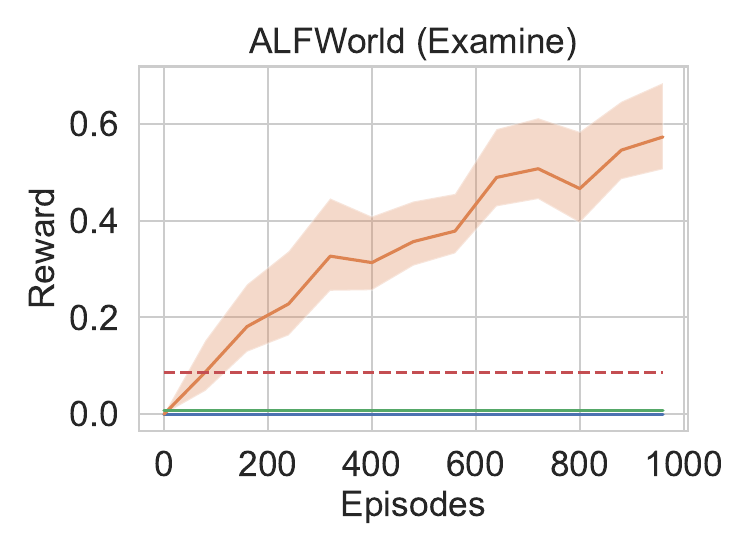}}
    	\includegraphics[width=0.30 \linewidth]{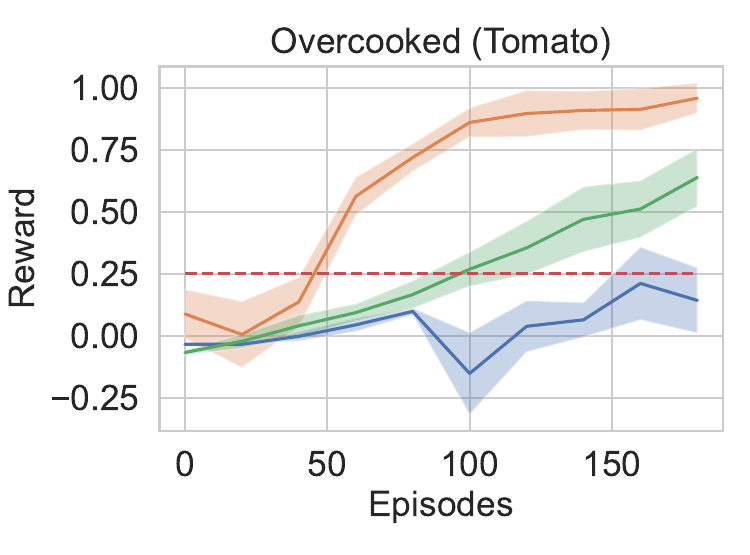}
    {    
    \includegraphics[width=0.30 \linewidth]{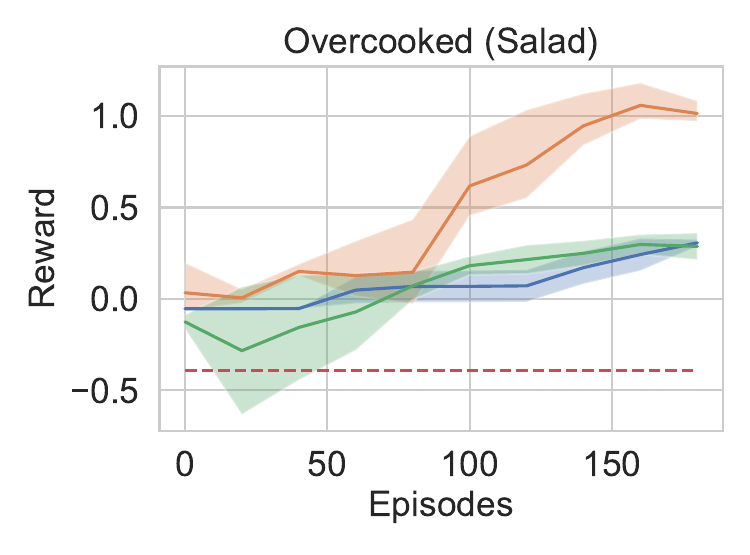}}
           {
    	\includegraphics[width=0.30 \linewidth]{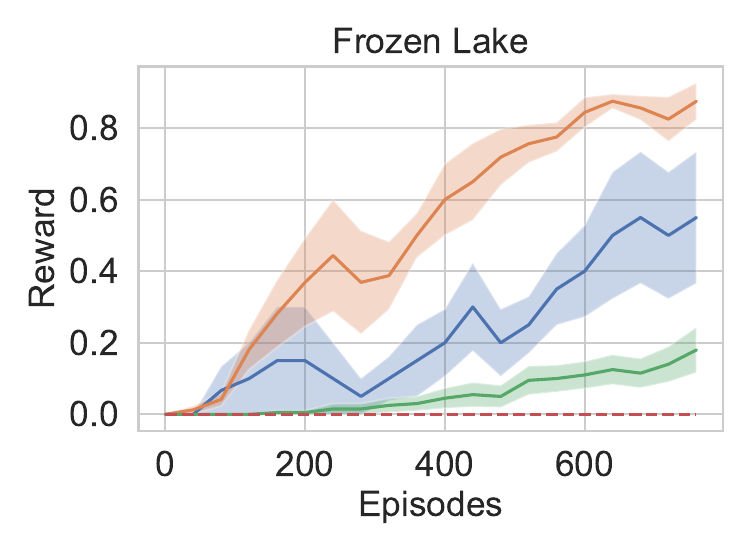}
    		}


 {
    		}
    
        
    \caption{{Results of comparison with online baselines. We plot the mean and standard error of the cumulative reward. For inference baselines, the reward is averaged over $20$ episodes. We plot the rewards averaged over the final third of the training processes for trainable baselines across five random seeds. 
    }}
\label{fig:baselinesdqn}
\end{figure*}

\section{Experiments}
\label{sec: exp}

In this section, we demonstrate the effectiveness of incorporating the LLM prior into the RL framework, including both policy-based and value-based algorithms in online and offline settings. LLM priors are used to reduce the exploration and optimization space or regulate the action policy's behavior, thereby resulting in a more efficient RL framework. 
Further details on experimental settings and more ablation study results can be found in the Appendix.
\subsection{Environments}
We consider three environments:
\newline\textbf{ALFWorld} 
 \citep{shridhar2020alfworld} is a popular benchmark for examining LLM-based agents' decision-making ability. This benchmark contains thousands of Textworld \citep{cote2019textworld} games with the embodied engine. We focus solely on the text game part, where the action space consists of high-level plans such as "go to a room". The legal actions are finite but large, with the maximum possible admissible action space reaching up to 50, making it challenging to explore from scratch. It contains thousands of tasks, making testing the generalization performance on unseen tasks convenient. We consider two classes of ALFWorld tasks: ALFWorld(pick) and ALFWorld(examine).
\newline\textbf{Overcooked} We use the partially observed text overcooked game\citep{tantrue}; the observation describes the position of visible items, and the agent should take a sequence of actions to deliver a dish. We consider two overcooked tasks: Overcooked(Tomato), deliver a dish of chopped tomato; Overcooked(Salad), deliver a salad containing chopped tomato and lettuce. The maximum possible action space is $8$. Besides the reward of $1$ for successfully delivering a dish, the textual Overcooked environment from \citep{tantrue} also provides dense reward signals.
\newline\textbf{Frozen Lake} is a grid world game; the agent should move to the goal position while avoiding full into holes. There are four admissible actions: up, down, right, and left.
\begin{table}[]
    \caption{Results of offline algorithms. We consider two tasks, ALFWorld (Pick) and Overcooked (Salad), referred to as Pick and Salad for simplicity. To further compare the sample efficiency of the baselines, we use offline datasets of varying sizes for Overcooked (Salad), denoted as Salad ($N$), each containing approximately $N$ $(s, a, s^\prime)$ tuples.}

    \centering
    \begin{tabular}{c|cccccc}
        \hline    
         Baseline& Pick &Salad(1000) &Salad(4000)&Salad(8000)&Salad(12000)&Salad(24000)\\
         \hline
         DQN &0.27&0.14&0.31&0.32&0.32&0.32\\
                  CQL &0.03 &0.32&0.32&0.32&0.32&\textbf{1.33}\\
         DQN-Prior &0.49&0.98&0.81& -&-&-\\
         CQL-Prior &\textbf{0.80}&\textbf{1.01}&\textbf{1.19} &- &-&-\\
         BC &0.31&0.57&0.91&-&-&-\\
         \hline
    \end{tabular}
    \label{tab:offline}
\end{table}



\subsection{Baselines}
We first test the ability of \textbf{LLM Prior} to solve SDMs in a zero-shot manner, without deliberately designed prompts. Next, we will introduce the value-based and policy-based RL algorithms:

\textbf{Valued-Based RL:} For the online setting, we use \textbf{DQN} and our \textbf{DQN-Prior}, which explores and updates the Q-function within the LLM prior space. For the offline setting, we compare the \textbf{CQL}, \textbf{CQL-prior}, and \textbf{Behavior Clone(BC)}. Similarly, compared to {CQL}, {CQL-Prior} regulates and updates Q-values in the narrowed prior action space. We set $k=5$ for DQN-Prior and CQL-Prior. The BC utilizes the Flan-T5 small \citep{rae2021scaling} as the action policy, and learns the action policy by optimizing: $\arg\min_\pi-\mathbb{E}_{(s,a)\sim\mathcal{D}}[\log {\pi(a|s)}].$

\textbf{Policy-Based RL:}
\textbf{GFlan} \citep{carta2023grounding} lets the LLM Flan-T5 small as the action policy and finetunes it via PPO. \citep{schulman2017proximal}. \textbf{GFlan-Prior} adds a KL constraint between the optimized action policy and LLM prior action on the basis of GFlan, as shown in \eq \ref{ppo}. 

\textbf{LLM Setting} Due to the Qwen-1.5 series \citep{bai2023qwen} containing a range of LLMs with different scales, we use Qwen-1.5 for our main experiments. Specifically, we use Qwen-1.5 7B \citep{bai2023qwen} as the backbone of the LLM prior for most environments, except for the more complex Overcooked(Salad) task, where Qwen-1.5 14B is used.

\subsection{Results Analysis}
\subsubsection{Incorporating LLM Priors into Value-Based RL}
\paragraph{The effectiveness of LLM Prior on Online Q-Learning}
As shown in Figure \ref{fig:baselinesdqn}, our DQN-Prior outperforms traditional online RL baselines such as DQN and GFlan by exploring and optimizing within a narrowed, more valuable prior action space. In complex ALFWorld tasks, DQN and GFlan struggle to bootstrap due to the large original action space, which can include up to 50 possible actions and the lack of dense reward incentives. In contrast, our DQN-Prior leverages the LLM's domain knowledge to reduce the exploration space to just five suboptimal actions, significantly improving sample efficiency. Additionally, in the Overcooked (Salad) and Frozen Lake tasks, the LLM prior cannot directly solve these tasks by sampling one action per state. However, DQN-Prior still achieves success and demonstrates greater sample efficiency than traditional RL methods, despite the small action space (only 4 or 8 actions). This shows that the LLM prior can provide a suboptimal action proposal space, and conducting RL within this space significantly enhances sample efficiency.
\begin{table}[]
    \caption{Results on the generalization performance of posterior sampling on ALFWorld(Pick). We use Qwen-1.5 7B with $k=5$ to generate LLM prior action proposals for training the Q function via DQN-Prior and CQL-Prior. This LLM prior configuration is highlighted by (*).}
    \centering
    \begin{tabular}{c|cccc|cccc}
    \hline
    \multicolumn{9}{c}{\textbf{Online (DQN-Prior)}}\\
    \hline
    \hline
         & \multicolumn{4}{c|}{\textbf{Seen Tasks}} & \multicolumn{4}{c}{\textbf{Unseen Tasks}} \\
        \hline
        \hline
        & $k=1$&$k=5$ &$k=10$&$k=15$&$k=1$&$k=5$&$k=10$&$k=15$\\
         \hline
         Qwen-1.5 4B&$0$&$0.35$&$\bf 0.46$&$0.35$&$0$&$0.08$&$0.25$&$\bf0.29$\\
         Qwen-1.5 7B&$0.19$&$0.62^{\bf{*}}$& $\bf0.77$&$0.65$&$0.04$&$0.38$&$\bf 0.42$&$\bf 0.42$\\
         LLaMa-3 8B &$0.04$&$0.62$&$\bf0.73$&$0.69$&$0.13$&$\bf 0.42$&$0.38$&$0.38$\\
         Qwen-1.5 14B&$0.19$&$0.65$& $\bf0.77$&$0.65$&$0.16$&$\bf 0.54$&$\bf 0.54$&$0.38$\\
         Qwen-1.5 32B&$0.35$&$0.77$&$\bf0.81$&$0.73$&$0.33$&$\bf 0.5$&$\bf 0.5$&$0.46$\\
         
         

         \hline
    \multicolumn{9}{c}{\textbf{Offline (CQL-Prior)}}\\
    \hline
    \hline
         & \multicolumn{4}{c|}{\textbf{Seen Tasks}} & \multicolumn{4}{c}{\textbf{Unseen Tasks}} \\
        \hline
        \hline
        & $k=1$&$k=5$ &$k=10$&$k=15$&$k=1$&$k=5$&$k=10$&$k=15$\\
         \hline
         Qwen-1.5 4B&$0$&$0.27$&$0.38$&$\bf0.62$&$0.0$&$0.17$&$0.17$&$\bf 0.38$\\
         Qwen-1.5 7B&$0.19$&$0.80^{\bf{*}}$& $\bf0.81$&$0.77$&$0.04$&$0.46$&$\bf 0.50$&$\bf 0.50$\\
         LLaMa-3 8B &$0.04$&$0.85$&$\bf0.88$&$0.81$&$0.13$&$\bf 0.54$&$0.50$&$\bf 0.54$\\
         Qwen-1.5 14B&$0.19$&$0.73$& $\bf0.92$&$0.73$&$0.16$&$0.54$&$\bf 0.58$&$0.54$\\
         Qwen-1.5 32B&$0.35$&$0.81$&$\bf0.85$&$0.69$&$0.33$&$\bf  0.63$&$0.42$&$0.50$\\
         
         

         \hline

    \end{tabular}
    \label{tab:generalize}
\end{table}
\paragraph{The effectiveness of LLM Prior on Offline Q-Learning} Table \ref{tab:offline} illustrates the performance of baselines trained on offline datasets. A more detailed description of the offline datasets can be found in the Appendix. Table \ref{tab:offline} shows that CQL-Prior outperforms all other baselines on the ALFWorld (Pick) and Overcooked (Salad) datasets. Although DQN-Prior lacks constraints on Q-values, it still demonstrates superior performance compared to DQN and CQL, which operate on the full action space. These results indicate that in offline RL, avoiding overestimation of Q-values and performing Bellman updates only within the suboptimal prior action space reduces optimization complexity, thereby improving sample efficiency.

Additionally, we provide further evidence to support this conclusion by testing the sample requirements for CQL and CQL-Prior. We find that traditional CQL, with fewer than $12000$ examples, fails to learn how to achieve the final goal, only managing to reach the subgoal of chopping the tomato and lettuce (with a reward of $0.4$). In contrast, CQL-Prior, with just around $1000$ examples, successfully learns how to achieve the final goal (with a reward of $1.0$) with high probability. Therefore, our CQL-Prior reduces the number of required samples by at least 90\% compared to CQL on this task.
\paragraph{The generalization ability across unseen tasks and LLMs} Table \ref{tab:generalize} demonstrates the generalization ability of our posterior sampling framework by combining the Q-function, trained with Qwen-1.5 7B and $k=5$, with other LLMs and varying action proposal numbers ($k$) during inference. Even though the Q-network is trained with Qwen-1.5 7B and $k=5$, it can still effectively guide the decision-making process, i.e., action selection, for unseen LLMs of different scales and architectures. It is worth noting that, although Qwen-1.5 4B was originally unable to solve ALFWorld (Pick), it becomes capable of tackling such complex tasks by leveraging the Q-function trained with more powerful LLMs. This value-based RL approach, which utilizes an LLM prior, presents a promising opportunity to train an information-rich Q-network using larger LLMs while deploying only the Q-network and smaller LLMs on the client side. This method enhances reasoning speed and reduces deployment complexity. Furthermore, larger LLMs generally perform better during inference, as their action proposals are of higher quality. $k=1$ indicates the performance of the LLM on its own, without involving the Q-function. The setting of $k=5$ was chosen during training for querying efficiency, but it is not necessarily the optimal choice. We observed better test performance with $k=10$ on seen tasks, indicating that our method can generalize to a larger action space to some extent. In most cases, inference performance increases and then decreases as $k$ increases. This may occur because a larger action space is more likely to include the optimal action, but an excessively large action space can introduce unseen $(s, a)$ pairs during training, leading to out-of-distribution (OOD) problems in both online and offline settings.

Regarding generalization to unseen tasks, multiple LLM action proposals with $k>1$, guided by the Q-function, generally outperform the LLM’s zero-shot performance with $k=1$. This demonstrates the generalization capability of our posterior sampling framework, enabled by the powerful LLM as an action prior and the Q-function’s ability to compress interactive experiences.
\begin{figure*}[t!]
	\centering
	\subfigure[Results of policy-based RL with LLM prior]{
 \label{fig:policy_base}
\includegraphics[width=0.30 \linewidth]{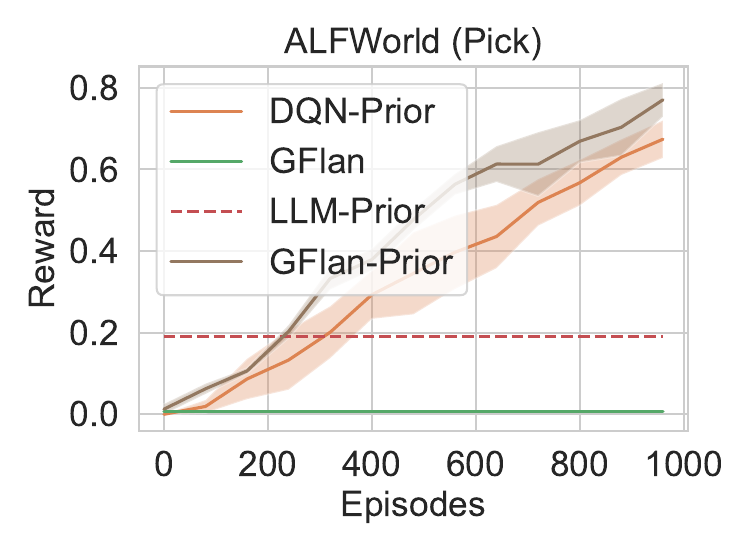}
    \includegraphics[width=0.30 \linewidth]{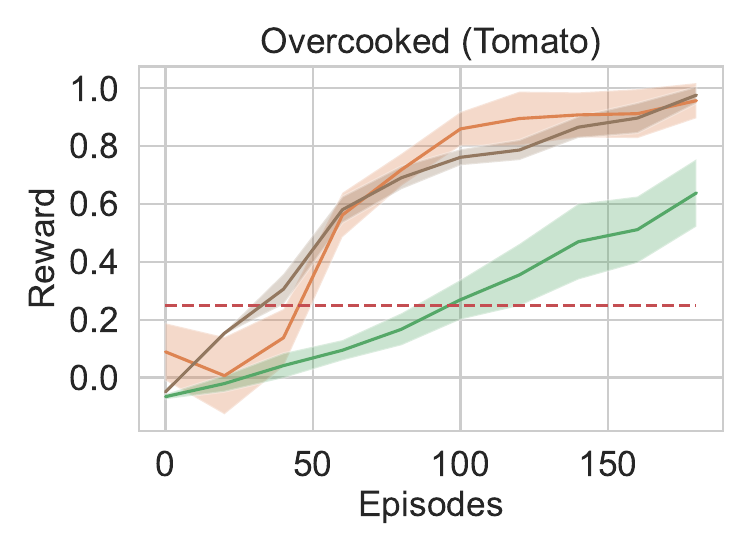}
\includegraphics[width=0.30 \linewidth]{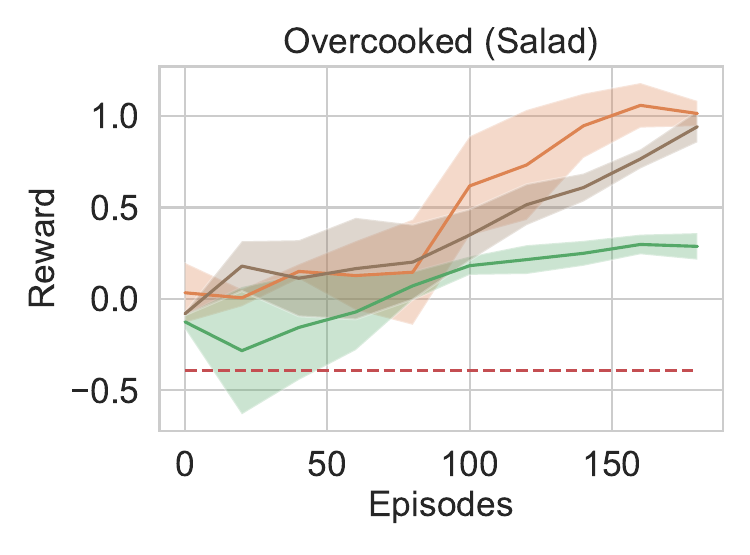}
     }
     \subfigure[$k$ for GFlan-Prior]
      {    \label{gflank} \includegraphics[width=0.30 \linewidth]{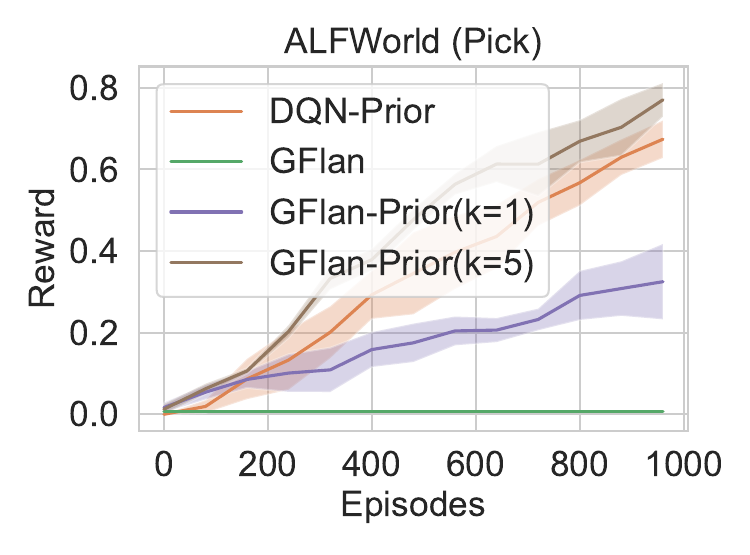}
     }
    \subfigure[$\alpha$ for GFlan-Prior]
      {    \label{gflanalpha} \includegraphics[width=0.30 \linewidth]{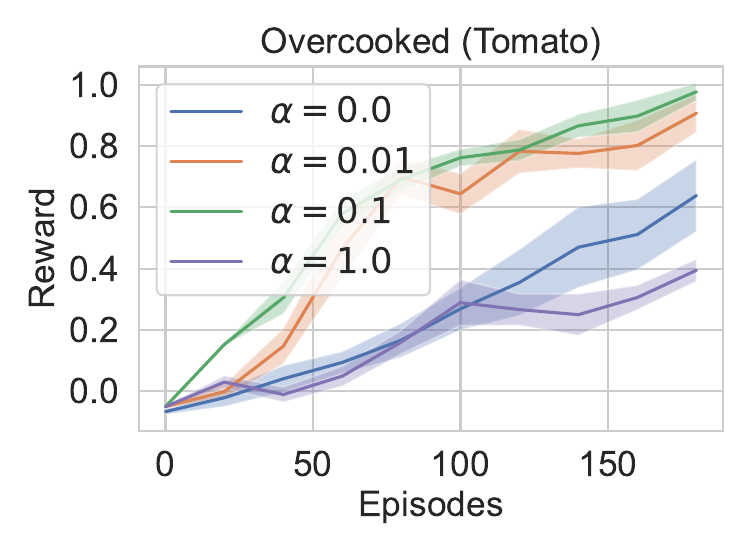}
     }
    \subfigure[$\alpha$ of DQN-Prior]{
\label{dqnalpha}  \includegraphics[width=0.30 \linewidth]{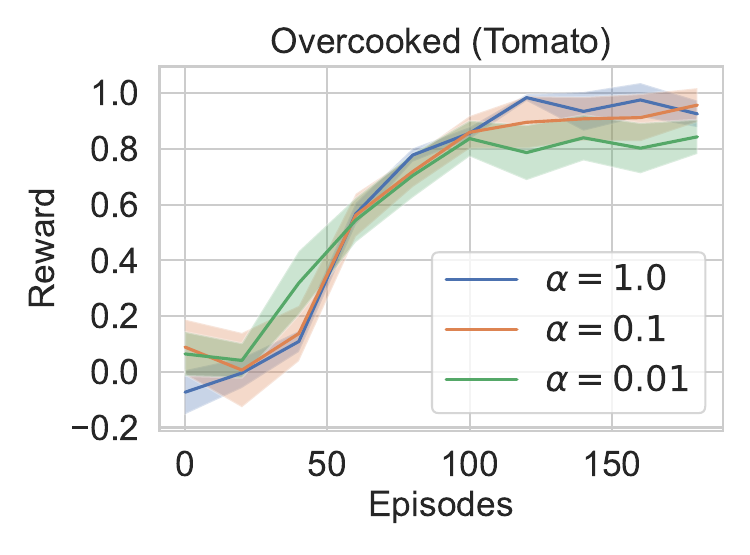}		
    }
    \caption{(a) Comparison of policy-based RL algorithms, we plot DQN-Prior and LLM-Prior for reference. (b) The ablation study on the number of LLM action proposals $k$ used to approximate the KL divergence. (c) The ablation of the KL coefficient of GFlan-Prior. (d) The ablation of the softmax temperature over Q values of the DQN-Prior }

\end{figure*}

\subsubsection{Incorporating LLM Priors into Policy-Based RL} The \fig \ref{fig:policy_base}, \ref{gflank}, \ref{gflanalpha} show the results of policy-based RL algorithms. As shown in Figure \ref{fig:policy_base}, GFlan-Prior significantly outperforms GFlan and achieves performance comparable to DQN-Prior. Unlike GFlan, GFlan-Prior leverages the prior knowledge embedded in LLMs by aligning with the LLM prior distribution through the incorporation of a KL constraint, as shown in Equation \ref{ppo}. In our main results, we sample $k=5$ action proposals from the LLM prior to approximate the KL divergence. An ablation study on $k$ can be found in Figure \ref{gflank} and Figure \ref{fig:policy_base_k} in the Appendix. Similar to the results in Table \ref{tab:generalize}, with $k=5$, we are more likely to sample the optimal action than with $k=1$, thereby better supervising the learning of the action policy. 
  
The ablation study on the coefficient of the KL divergence is shown in Figure \ref{gflanalpha}. A large $\alpha$ will cause the learned policy $\pi$ to closely follow the LLM prior proposals, while a small $\alpha$ encourages exploration. In DQN-Prior, the hyperparameter $\alpha$ regulates exploration by acting as the temperature in the softmax over Q, generating a Boltzmann distribution based on Q, as investigated in Figure \ref{dqnalpha}. GFlan-Prior appears to be more sensitive to the hyperparameter on controlling exploration than DQN-Prior.  
\section{Related Work}\label{Sec:RelatedWork}
This study explores the fusion of Bayesian inference and LLMs for decision-making tasks. Herein, 
{we present related works on LLM-Agent for decision-making tasks and how previous research pursues this combination.} 

 {
\paragraph{LLM-driven Agent} The idea of LLM-as-Agent has gone viral since the release of powerful large language models. Among all the works, some apply direct prompt engineering techniques using human-designed prompt and retrieval to harness LLMs for complex interactive tasks, such as ProAgent by \citep{zhang2023proagent}, Voyager by \citep{wang2023voyager} and generative agent by \citep{park2023generative}. Many researchers have also tried searching-based methods. \citep{yao2023tree} propose Tree-of-Thought (TOT) using BFS/DFS algorithm to search accurate decision sequences. \citep{hao2023reasoning} suggests using Monte-Carlo Tree Search to perform a more comprehensive search. These methods often fall into the framework of Proposer-Verifier \citep{snell2024scaling} where the language proposes a number of possible sequences and the verifier picks desirable candidates. There are also a few works that relate LLM to conventional reinforcement learning settings and investigate how the superiority in language space helps RL agents' capability in decision-making. For instance, \citep{brooks2024large} uses LLM as a world model and performs policy iteration by in-context learning. GFlan proposed by \citep{carta2023grounding} uses a learnable LLM as a probability likelihood estimator for all possible actions and incorporates it into actor-critic learning frameworks where a value function generates assessment and fine-tuning the LLM correspondingly. \citep{yan2023ask} also resort to an actor-critic learning paradigm but instead of fine-tuning the language model, they train a simple classifier that selects valuable outputs from a fixed language model (e.g. GPT3.5). \citep{zhang2024can} use LLM as a behavior regularize and add it to the value estimate. \citep{wen2024entropy} leverages entropy-regularized reinforcement learning to train LLM agents for verbal sequential decision-making tasks. Our work also falls into the category of LLM-driven RL agent. However, different from others, we see a large language model as an excellent action proposer and perform Q-learning directly on the reformed action sequence space, thereby achieving better learning efficiency with minimum modification.

}

{
\paragraph{LLM in Bayesian Sense}
The probabilistic nature of transformer-based language models makes them well-suited for a Bayesian inference framework. As highlighted by \citep{korbak2022rl}, the Bayesian approach offers a unified perspective on both modeling and inference. In in-context learning (ICL), some researchers argue for the Bayesian properties of ICL (\citep{jiang2023latent, wang2024large, ye2024pre}). Extending beyond ICL, \citep{yang2021fudge} trains a classifier to approximate the likelihood function, framing conditional text generation as a Bayesian inference problem, with the LLM serving as the prior. \citep{hu2023amortizing} interprets Chain-of-Thought (CoT) reasoning in natural language processing (NLP) tasks, such as text infilling and sequence continuation, as probabilistic inference problems, using GFlowNet to fine-tune the model. In this context, the language model functions as a unified probabilistic generative process, capable of representing both prior and likelihood distributions. Similarly, \citep{zhao2024probabilistic} adopts a Bayesian perspective, utilizing Sequential Monte Carlo (SMC) methods to generate undesired outcomes. \citep{gallego2024distilled} frames LLM inference as sampling from a posterior distribution focused on high-reward regions, applying this approach to self-improvement tasks. Inspired by these works, we also formulate our agent framework within a Bayesian setting, exploring the use of LLMs as priors.

}

\section{Conclusion}
In this work, we aim to leverage the rich domain knowledge and strong reasoning abilities of LLMs to solve complex decision-making tasks while avoiding the costly prompt design and fine-tuning of large language models. Given their rich prior knowledge but lack of task-specific experience, we do not rely on LLMs to directly output optimal plans. Instead, we focus on simplifying their role to generate reliable action proposals. Thus, we treat the powerful LLM as an action prior distribution and, from a Bayesian inference perspective, analyze how to incorporate LLM priors into solving MDPs. We utilize variational inference and posterior sampling to achieve this, ultimately proposing both policy-based and value-based frameworks with LLM priors. LLM priors improve traditional RL frameworks by reducing the exploration space or regulating the behavior of the action policy according to suboptimal LLM action proposals. Experimental results demonstrate a significant improvement in sample efficiency by incorporating fixed LLM priors into RL frameworks in both online and offline settings. This work focuses solely on text-based games with finite action spaces. In future work, we will explore scenarios with free-form and infinite action spaces, and try to incorporate prompt-based approaches to further enhance the quality of LLM prior action proposals.

\balance
\bibliography{ref}
\bibliographystyle{iclr2025_conference}
\section{Description of Classic Q-learning Algorithm}
We introduce the classic Q-learning algorithm in both offline and online settings.
\paragraph{Q-learning}
Q-learning is a classic value-based RL algorithm, which iteratively applies the Bellman optimal operator to train the Q-function, formally written as $\mathcal{B}^*Q(s,a)=r(s,a)+\gamma \mathbb{E}_{s^\prime\sim P(\cdot|s,a)}\left[\max_a^\prime Q(s^\prime,a^\prime)\right]$. After the optimal Q-function $Q^*$ learned, then the optimal action policy respect to the $Q^*$ is given as: $\pi^*(\cdot|s)=\arg\max_a Q^*(s,a)$. DQN \citep{mnih2013playing} is a well-known Q-learning algorithm, which can process various classes of the state information such as image and language, which learns a deep neural network $Q^\theta(s,a)\approx Q^*(s,a)$ to approximate $Q^*(s,a)$ and uses TD-learning to iteratively update the Q network, the loss function for the $i$-the iteration is given as: 
\begin{equation}
\mathcal{L}_i(\theta)=\mathbb{E}_{(s,a,s^\prime)\sim \mathcal{D}}\left[(Q^{\theta_i}(s,a)-y_i)^2\right],
\end{equation}
where $\mathcal{D}$ is the reply buffer, $y_i=\mathcal{B}^*Q^{\theta_{i-1}}(s,a)=r(s,a)+\gamma \max_{a^\prime\in \mathcal{A}} Q^{\theta_{i-1}}(s^\prime,a^\prime)$. $\theta_{i-1}$ is the learned parameters of the last iteration and is frozen during the gradient descent for optimizing the loss function $\mathcal{L}_i$.
\paragraph{Conservative Q-learning}
CQL \citep{kumar2020conservative} framework is designed for offline reinforcement learning, where only offline data is available, but no further online exploration is accessible. Different from traditional Q-learning, CQL introduces an additional regularization term on top of the standard DQN framework or actor-critic framework(such as SAC \citep{haarnoja2018soft}), ensuring that the learned Q-function lower-bounds the actual value. The lower-bounded Q-values help mitigate the overestimation problem commonly encountered with out-of-distribution (OOD) state-action pairs. The CQL loss on the top of value-based DQN is given as:
\begin{equation}
\mathcal{L}_i(\theta)=\beta\mathbb{E}_{(s,a)\sim \mathcal{D}}\left[\log \sum_{a^\prime\in \mathcal{A}} \exp(Q^{\theta_i}(s,a^\prime))-Q^{\theta_i}(s,a)\right]+\frac{1}{2}\mathbb{E}_{(s,a,s^\prime)\sim \mathcal{D}} \left[(Q^{\theta_i}(s,a)-y_i)^2\right],
\end{equation}
where $y_i=r(s,a)+\gamma \max_{a^\prime} Q^{\theta_{i-1}}(s^\prime,a^\prime)$ and $\beta$ is a hyper-parameter.

\section{Proofs}

\subsection{Probabilistic Inference Derivation}\label{appendix: probabilistic}
In this section, we derive the detailed formulation for probabilistic inference for using LLM as a prior. This includes the policy-based method where we learn a parametrized policy function and the value-based method where we directly perform posterior inference.

\paragraph{Variational Inference}
 Assume that we have the optimality variable $\mathcal{O}$ indicating the quality of a trajectory $\tau = \{s_0, a_0, r_0, s_{1}, ..., s_n \}, p(\tau) = p(s_0)\prod_t p(a_t|s_t)p(s_{t+1}|s_t, a_t)$. The likelihood function with temperature parameter $\alpha$ can be written as:
\begin{equation}
    p(\mathcal{O}=1|\tau) = \exp \left(\sum_t \gamma^tr_t/\alpha \right)
\end{equation}
Our goal is to maximize the optimal marginal distribution formulated as follows:
\begin{equation}
\begin{aligned}
    p(\mathcal{O}=1) & =  \int p(\mathcal{O}=1, \tau) d\tau \\
    & = \int q(\tau) \frac{p(\mathcal{O}=1|\tau)p(\tau)}{q(\tau)} d\tau 
\end{aligned}
\end{equation}
where $q(\tau)$ is a parametrized variational distribution. Then, our variational inference objective can be written as:
\begin{equation}
\begin{aligned}
\log p(\mathcal{O}=1) & \geq \int q(\tau) \log \left[p(\mathcal{O}=1|\tau) \frac{p(\tau)}{q(\tau)} \right] d\tau \\
& = \mathbb{E}_{q(\tau)}\left[\log p(\mathcal{O}=1|\tau) \right] - \textbf{KL}\left[q(\tau) | p(\tau)\right] \\
& = \textbf{ELBO}
\end{aligned}
\end{equation}
In the context of language models, we define prior $p(\tau)$ as trajectory distribution generated through a fixed LLM denoted as:
\begin{equation}\label{eq: llm prior}
    p_{LLM}(\tau) = p(s_0)\prod_t p_{LLM}(a_t|s_t)p(s_{t+1}|s_t, a_t)
\end{equation}
where $p_{LLM}(a_t|s_t)$ is the probability likelihood of the generated output $a_t$ given input text $s_t$. The variational distribution $q(\tau)$ can also be decomposed in the same way as:
\begin{equation}
    q(\tau) = p(s_0)\prod_t \pi_{\theta}(a_t|s_t)p(s_{t+1}|s_t, a_t)
\end{equation}
where the parametrized policy function $\pi_{\theta}$ can be a learnable language model with parameter $\theta$. Substituting these factorizations back to ELBO we have the step-wise objective function written as:
\begin{equation}
    -\mathcal{L} = \sum_{t} \mathbb{E}_{\pi_{\theta}}\left[\gamma^t r_t \right] - \alpha \textbf{KL}\left[ \pi_{\theta}(a_t|s_t) \| p_{LLM}(a_t|s_t) \right]
\end{equation}
\begin{equation}
    \pi^{*} = \argmin_{\theta} \mathcal{L}
\end{equation}

\paragraph{Direct Posterior Inference} Instead of learning a parametrized policy which might involve fine-tuning a language model, we can also try directly sampling from the posterior distribution formulated as:
\begin{equation}
\begin{aligned}
     p(\tau|\mathcal{O}=1) & = \frac{p(\mathcal{O}=1|\tau)p(\tau)}{p(\mathcal{O}=1)}\\
     & \propto p(\mathcal{O}=1|\tau)p(\tau) \\
     & = \prod_{t}p(\mathcal{O}_t=1|s_t, a_t)p_{LLM}(a_t|s_t)p(s_{t+1}|s_{t}, a_t)
\end{aligned}
\end{equation}
in which we define $p(\tau)$ as in Equation \ref{eq: llm prior}. According to Control-as-Inference framework proposed by \cite{levine2018reinforcement}, we can formulate posterior inference as Soft Q-Learning where Q-values are updated using a modified soft Bellman equation:
\begin{equation}
\begin{aligned}
    Q^{\pi}(s_t, a_t) = r(s_t, a_t) + \gamma \mathbb{E}_{s_{t+1}\sim p(\cdot|s_t, a_t)}\left[V(s_{t+1}) \right] \\
    V(s_t) = \log\int \exp (Q(s_t, a_t)/\alpha) da_t
\end{aligned}
\end{equation}
and the policy can be defined as a softmax function over the Q-values: $\pi(a|s) = \exp(Q^{\pi}(s,a)/\alpha) / \int_{a^{'}} \exp(Q^{\pi}(s,a^{'})/\alpha) da^{'}$, ensuring that actions with higher Q-values are more likely to be chosen. In our paper, however, due to the enormously large space of (textual) state-action pair, we also apply sampling approximation methods. Specifically, we leverage similar ideas from \cite{fourati2024stochastic} which sample a random subset of actions from the complete action space and seek the optimal within this subset. In our case, we rely on a fixed LLM to provide such action subspace and can further narrow down the space through repeated sampling (\cite{brown2024large}):
\begin{equation}
\begin{aligned}
    \pi(a|s) & = \frac{\mathbf{1}_{a \in p_{LLM}(\cdot|s) } \cdot \exp\left(Q^{\pi}(s, a)/\alpha\right)}{
    \int_{a^{'}} \mathbf{1}_{a \in p_{LLM}(\cdot|s) } \cdot \exp\left(Q^{\pi}(s, a^{'})/\alpha\right) da^{'}
    } \\
    & \approx \mathbf{1}_{a\in\{a^i\}} \cdot \frac{1}{N} \frac{\exp(Q^{\pi}(s, a^{i})/\alpha)}{\sum_{i}\exp(Q^{\pi}(s, a^{i})/\alpha)}, a^i \sim p_{LLM}(a|s)
\end{aligned}
\end{equation}
This means we restrict the Bellman backup to a specific action subspace that can potentially provide a refined set of actions based on context, largely reducing computational complexity and focusing on more relevant actions.


\subsection{Proof of Proposition \ref{proposition}}
\propsa*
\begin{proof}
The proof mainly follows the \citep{li2024q}. Denote the above sampling strategy indeed follows a distribution $q$, and for each action $a$, the probability of $a$ is sampled from the distribution $q$ is given by:
\begin{equation}
    \begin{aligned}
            q(a|s_t)&=\mathbb{E}_{\{a_1,\cdots,a_k\}\sim p_\text{LLM}(\cdot|s)}\left[\sum_{i=1}^k \mathbb{I}(a_i=a) \frac{\exp( Q^\theta(s_t,a_i)/\alpha)}{\sum_{j=1}^k \exp( Q^\theta(s_t,a_j)/\alpha)}\right]\\
            &=\mathbb{E}_{\{a_1,\cdots,a_k\}\sim p_\text{LLM}(\cdot|s)}\left[ \frac{\sum_{i=1}^k \mathbb{I}(a_i=a)}{\sum_{j=1}^k \exp( Q^\theta(s_t,a_j)/\alpha)}\right]\exp( Q^\theta(s_t,a)/\alpha)\\
            &=\mathbb{E}_{\{a_1,\cdots,a_k\}\sim p_\text{LLM}(\cdot|s_t)}\left[ \frac{\frac{1}{k}\sum_{i=1}^k \mathbb{I}(a_i=a)}{\frac{1}{k}\sum_{j=1}^k \exp( Q^\theta(s_t,a_j)/\alpha)}\right]\exp( Q^\theta(s_t,a)/\alpha)\\
    \end{aligned}
\end{equation}
According to the Law of Large Numbers, when $k\rightarrow \infty$, we have:
\begin{equation}
\begin{aligned}
        \lim_{k\rightarrow\infty} q(a|s_t)&= \lim_{k\rightarrow\infty} \mathbb{E}_{\{a_1,\cdots,a_k\}\sim p_\text{LLM}(\cdot|s_t)}\left[ \frac{\frac{1}{k}\sum_{i=1}^k \mathbb{I}(a_i=a)}{\frac{1}{k}\sum_{j=1}^k \exp( Q^\theta(s_t,a_j)/\alpha)}\right]\exp( Q^\theta(s_t,a)/\alpha)\\
        &= \lim_{k\rightarrow\infty} \mathbb{E}_{\{a_1,\cdots,a_k\}\sim p_\text{LLM}(\cdot|s_t)}\left[ \frac{p_\text{LLM}(a|s_t)}{\mathbb{E}_{a_j \sim p_\text{LLM}(\cdot|s_t)} \exp( Q^\theta(s_t,a_j)/\alpha)}\right]\exp( Q^\theta(s_t,a)/\alpha)\\
        &= p_\text{LLM}(a|s_t)\frac{\exp( Q^\theta(s,a)/\alpha)}{\mathbb{E}_{a_j \sim p_\text{LLM}(\cdot|s)} \exp( Q^\theta(s,a_j)/\alpha)}
\end{aligned}
\end{equation}
Following the proof process from the Appendix A.1. in \cite{rafailov2023direct}, we have:
\begin{equation}
\label{dpo}
    \arg\max _\pi \mathbb{E}_{\pi(a|s_t)} [Q^\theta(s_t,a)]-\alpha \text{KL}\left(\pi(\cdot|s_t)\|p_\text{LLM}(\cdot|s_t)\right)=p_\text{LLM}(a|s_t)\frac{\exp( Q^\theta(s,a)/\alpha)}{\mathbb{E}_{a_j \sim p_\text{LLM}(\cdot|s)} \exp( Q^\theta(s,a_j)/\alpha)}
\end{equation}
\end{proof}

Additionally, as shown in Sec. \ref{sec: method}, the variational inference approach learns the optimal policy $\pi$ by maximising:
\begin{equation}
\label{variation_appen}
\arg\max_\pi \sum_t  \mathbb{E}_\pi\left[\gamma^t r_t\right] - \alpha \, \text{KL}(\pi(a|s_t) \| p_{\text{LLM}}(a|s_t)).
\end{equation}
Introducing the occupancy measure $\rho$, $\rho(s)=\frac{1}{1-\gamma}\sum_{t=0}^\infty [\gamma^t \mathbb{P}(s_t=s|a\sim \pi)]$, we have the following form objective respected to the Q-function
\begin{equation}
\label{optq}
\arg\max_\pi \mathbb{E}_{\rho(s)}[\mathbb{E}_{a\sim\pi}[Q^\pi(s,a)]-\alpha \text{KL}(\pi(a|s_t) \| p_{\text{LLM}}(a|s_t))]
\end{equation}
Combining Equations \ref{dpo} and \ref{optq}, we find that the variational inference approach and the direct posterior sampling yield similar solutions.
\section{Additional Experiment Details}
\subsection{Environments}

We consider three environments:
\newline\textbf{ALFWorld} 
 We consider two classes of ALFWorld tasks: ALFWorld(pick) and ALFWorld(examine). For ALFWorld (Pick), we evaluate the online training baselines on 28 tasks, such as "put the cellphone on the armchair," and test the generalization ability on 26 unseen tasks. For ALFWorld (Examine), we use 11 tasks, such as "examine the laptop with the desk lamp." There are no auxiliary rewards, except for a reward of $1.0$ for reaching the final goal. 
\newline\textbf{Overcooked} We use the partially observed text overcooked game; the observation describes the position of visible items, and the agent should take a sequence of actions to deliver a dish. We consider two overcooked tasks: Overcooked(Tomato), deliver a dish of chopped tomato; Overcooked(Salad), deliver a salad containing chopped tomato and lettuce. The maximum possible action space is $8$. Besides the reward of $1$ for successfully delivering a dish, the textual Overcooked environment from \cite{tantrue} also provides dense reward signals. The reward shaping is as follows: 0.2 for correctly chopping an ingredient, 1 terminal reward for successfully delivering the correct dish, $-0.1$ for delivering any incorrect item, and $-0.001$ for each time step.
\newline\textbf{Frozen Lake} is a grid world game; the agent should move to the goal position while avoiding full into holes. There are four admissible actions: up, down, right, and left. The agent will receive a reward of $1$ for reaching the final goal and a penalty of $-1$ for falling into holes. The maximum steps for these three tasks are shown in Table \ref{tab:horizon}.
\begin{table}[]
    \centering
    \begin{tabular}{c|ccccc}
    \hline
         & ALFWorld(Pick) &ALFWolrd(Examine) &Cook(Tomato)&Cook(Salad) &Frozen Lake  \\
         \hline
         Horizon& 60&60&30&50&20 \\
         \hline
    \end{tabular}
    \caption{Maximize horizon of environments. }
    \label{tab:horizon}
\end{table}

\subsection{LLM prior implementation}
There are two ways to sample an action $a$ from the LLM prior $p_{\text{LLM}}(\cdot | s_t)$. First, since the state and action can be described as text, and assuming the action $a$ consists of $k$ tokens, the probability of the LLM generating action $a$ is given by $p(a | s_t) = \prod_{i=1}^k p_{\text{LLM}}(a_i | s_t, a_{<i})$. Based on this probability, the first type of LLM prior computes a distribution over actions, denoted as $p_{\text{LLM}}^{\text{dist}}(a | s_t) = \frac{\exp p(a | s_t)}{\sum_{a^\prime} \exp p(a^\prime | s_t)}$. In contrast, the second approach involves sampling a free-form output from the LLM, which is then mapped to an executable action via a simple rule-based projection $\mathcal{P}$. We denote this LLM prior, which relies on mapping the LLM's output, as $p_{\text{LLM}}^{\text{map}}$, where $a \sim p_{\text{LLM}}^{\text{map}}(\cdot | s_t) \leftrightarrow o \sim \text{LLM}(\cdot | s_t), a = \mathcal{P}(o)$.


We observed a performance gap between these two types of LLM action priors. For instance, in the ALFWorld pick \& place task, $p_{\text{LLM}}^{\text{map}}$ achieved a winning rate of $19\%$, while $p_{\text{LLM}}^{\text{dist}}$ failed, with a winning rate of $0\%$, both using the LLM Qwen1.5-7B. Therefore, in this work, we implement $p_{\text{LLM}}^{\text{map}}$ as the LLM prior for thoughtfully exploiting the power of the LLM in SDM tasks, which we denote as $p_{\text{LLM}}$ for simplicity.
Examples of LLM's output is given as:
\begin{tcolorbox}[title={For ALFWorld(Pick) from Qwen-1.5 7B}]
    
\textbf{Input:}
Current observation:You are in the middle of a room. Looking quickly around you, you see a armchair 1, a bed 1, a diningtable 1, a drawer 2, a drawer 1, a garbagecan 1, a sidetable 2, and a sidetable 1.

Your task is to: put some cellphone on armchair.. You are allowed to take the following actions: go to armchair 1, go to bed 1, go to diningtable 1, go to drawer 1, go to drawer 2, go to garbagecan 1, go to sidetable 1, go to sidetable 2, inventory, look.
Please select an action from the admissible actions. 

\textbf{Output:} 
1: take cd 2 from sidetable 2 , 

2: Open drawer 2 Result: You open door, 

3: look Description: You look around the room., 

4: look > Current observation:You arrive at loc, 

5: open drawer 2 Loc 8: You

\end{tcolorbox}
\subsection{Hyper-parameters}
Our algorithms are trained on one machine with $2$ 40G A100. based on Pytorch-GPU 2.1 and cuda12.4. Table \ref{tab:tomato},\ref{tab:salad},\ref{tab:lake},\ref{tab:pick},\ref{tab:exam} reports the main hyper-parameters of our algorithms. For all CQL-based algorithms, we set the hyperparameter $\beta$ for regulating the Q-values, as shown in \eq \ref{cql_loss}, as $5.0$.
\paragraph{Offline Datasets}
The composition of offline datasets is shown in Table \ref{tab:offline_data}, which reports the number of $(s,a,s^\prime)$ examples.

\begin{table}[]
    \caption{The hyperparameters on ALFWorld(Pick)}
    \centering
    \begin{tabular}{cccccccc}
    \hline
    Baselines & Learning Rate & Epochs & Batch Size & Update Frequency& LLM &$\alpha$\\
    \hline
        DQN-Prior & 5e-4& 4&128&5&/ &0.01  \\
        CQL-Prior & 5e-4& 4&128&5  &/ &0.01 \\
        GFlan-Prior  & 1e-4& 16&64&16&Flan-T5 Small&0.01  \\
    \hline
        \end{tabular}
    \label{tab:paralake}
\end{table}
\begin{table}[]
    \caption{The hyperparameters on ALFWorld(Examine)}
    \centering
    \begin{tabular}{cccccccc}
    \hline
    Baselines & Learning Rate & Epochs & Batch Size & Update Frequency& LLM &$\alpha$\\
    \hline
        DQN-Prior & 5e-4& 4&128&10&/ &0.01  \\
        CQL-Prior & 5e-4& 4&128&10  &/ &0.01 \\
        GFlan-Prior  & 1e-4& 16&64&16&Flan-T5 Small&0.01  \\
    \hline
        \end{tabular}
    \label{tab:paralake}
\end{table}
\begin{table}[]
    \caption{The hyperparameters on Overcooked(Tomato
    )}
    \centering
    \begin{tabular}{cccccccc}
    \hline
    Baselines & Learning Rate  & Batch Size & Update Frequency& LLM &$\alpha$\\
    \hline
        DQN-Prior & 5e-4&128&5&/ &0.1  \\
        CQL-Prior & 5e-4&128&5  &/ &0.1 \\
        GFlan-Prior  & 1e-4&64&16&Flan-T5 Small&0.1  \\
    \hline
        \end{tabular}
    \label{tab:tomato}
\end{table}
\begin{table}[]
    \caption{The hyperparameters on Overcooked(Salad)}
    \centering
    \begin{tabular}{cccccccc}
    \hline
    Baselines & Learning Rate  & Batch Size & Update Frequency& LLM &$\alpha$\\
    \hline
        DQN-Prior & 5e-4&128&5&/ &0.1  \\
        CQL-Prior & 5e-4&128&5  &/ &0.1 \\
        GFlan-Prior  & 1e-4&64&16&Flan-T5 Small&0.1  \\
    \hline
        \end{tabular}
    \label{tab:salad}
\end{table}
\begin{table}[]
    \caption{The hyperparameters on Frozen Lake}
    \centering
    \begin{tabular}{cccccccc}
    \hline
    Baselines & Learning Rate  & Batch Size & Update Frequency& LLM &$\alpha$\\
    \hline
        DQN-Prior & 5e-4& 128&5&/ &0.1  \\
    \hline
        \end{tabular}
    \label{tab:lake}
\end{table}
\begin{table}[]
    \caption{The hyperparameters on ALFWorld(Pick)}
    \centering
    \begin{tabular}{cccccccc}
    \hline
    Baselines & Learning Rate  & Batch Size & Update Frequency& LLM &$\alpha$\\
    \hline
        DQN-Prior & 5e-4&128&10&/ &0.01  \\
        CQL-Prior & 5e-4&128&10  &/ &0.01 \\
        GFlan-Prior  & 1e-4&64&16&Flan-T5 Small&0.01  \\
    \hline
        \end{tabular}
    \label{tab:pick}
\end{table}
\begin{table}[]
    \caption{The hyperparameters on ALFWorld(Examine)}
    \centering
    \begin{tabular}{cccccccc}
    \hline
    Baselines & Learning Rate  & Batch Size & Update Frequency& LLM &$\alpha$\\
    \hline
        DQN-Prior & 5e-4&128&10&/ &0.01  \\
        CQL-Prior & 5e-4&128&10  &/ &0.01 \\
        GFlan-Prior  & 1e-4&64&16&Flan-T5 Small&0.01  \\
    \hline
        \end{tabular}
    \label{tab:exam}
\end{table}
\begin{table}[]
    \centering
    \begin{tabular}{c|ccc}
    \hline
         Dataset & Total Examples & Good Examples & Bad Examples  \\
         \hline
         ALFWorld(Pick)&6572&6572&0 \\
         Salad(1000)&1186&574&612\\
         Salad(4000)&3892&2209&1683\\
         Salad(8000) &7571&4224&3347\\
         Salad(12000)&11939&4004&7935\\
           Salad(24000)&23869&7191&16678\\
\hline
    \end{tabular}
    \caption{The composition of the offline dataset}
    \label{tab:offline_data}
\end{table}
\subsection{Additional Results}
\begin{figure*}[t!]
	\centering
	{

\includegraphics[width=0.28 \linewidth]{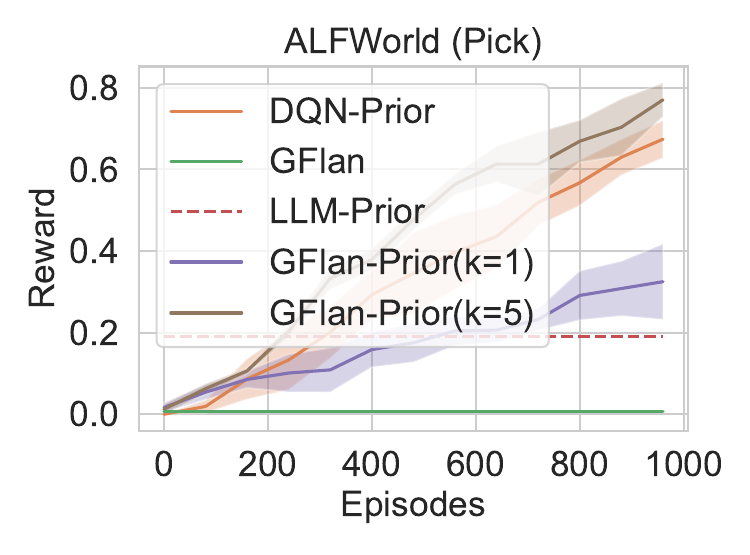}
    \includegraphics[width=0.28 \linewidth]{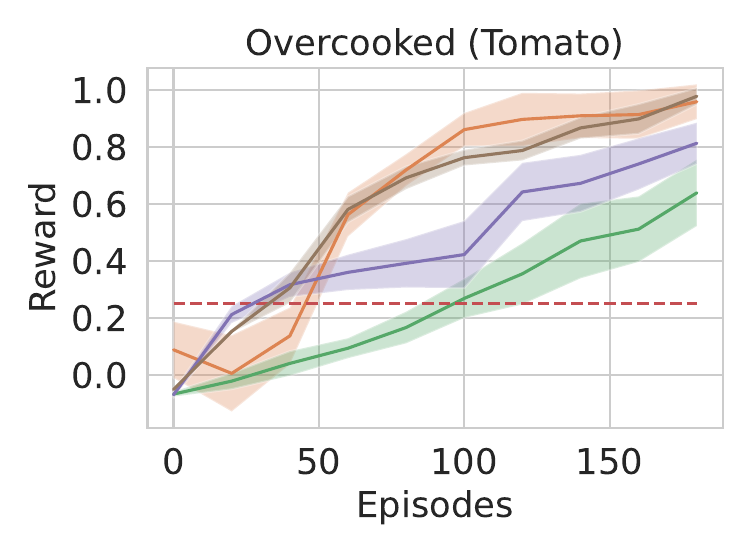}
\includegraphics[width=0.28 \linewidth]{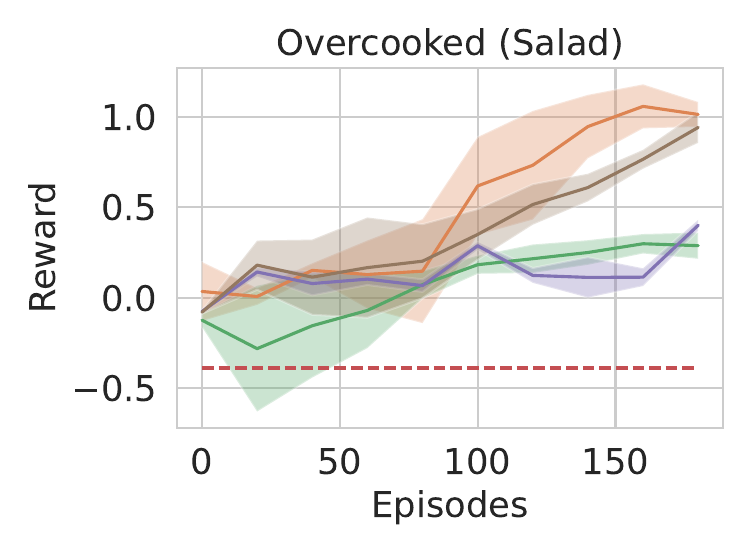}
     }
    
    \caption{The ablation of the number of action proposals $k$ used for approximating KL divergence between the required action policy and the LLM prior action distribution.}
\label{fig:policy_base_k}
\end{figure*}

\paragraph{Additional ablation on $k$ for GFlan-Prior} As shown in \fig \ref{fig:policy_base_k}, when we sample $k=5$ action proposals from the LLM to approximate the KL divergence, it performs better than when using $k=1$. This is because, with more proposals, the optimal action is more likely to be included in the subset, thus better guiding the action policy.

\section{Additional Related Works}\label{sec: extra related works}
\paragraph{Value-based Inference of LLM} Not limited to the pre-training phase, it has been recently investigated how scaling law applies at inference time for LLM. This is often referred to as searching over an enormously large output space with guidance from value estimation. For example, the naive case is beam search (citation) which selects the top K possible sequence based on cumulative probabilities as opposed to greedy search. The value estimate here is the likelihood of the sentence as predicted by the language model. \citep{wang2022self} propose CoT-SC, which searches over reasoning paths and selects the most frequent answer. Tree-of-Thought (ToT, \citep{yao2023tree}) adopts depth/breadth-first search. \citep{hao2023reasoning} introduce Reasoning-via-Planning (RAP) which uses Monte-Carlo Tree Search and value estimate from prompting LLM. TS-LLM designed by \citep{feng2023alphazero} guides MCTS with a learned value function conditioned on state and a learned Outcome-supervised Reward Model (ORM). \citep{li2024q} proposes Q-probing to adapt a pre-trained language model to also maximize a tasks-specific reward function in code generation tasks. Furthermore, recent works investigate Process-supervised Reward Model (PRM, \citep{lightman2023let}) and apply it in LLM inference, \citep{mcaleese2024llm} searches over a step-wise critic model in code reviewing tasks. \citep{wang2024math} learn a PRM in math reasoning and use it for LLM fine-tuning. \citep{snell2024scaling} provides a detailed benchmark of different inference methods. \citep{zhang2024generative} represents the score as the probability of a single text token (e.g. 'Yes' or 'No') under the context and the prompt. Even though our proposed method can be seen as an example of value-based inference, we focus on a different perspective where we investigate how LLM helps conventional value-based reinforcement learning algorithms in complex environments. More discussion can be found in Section \ref{sec: method}.

\end{document}